%% file: main.tex
\documentclass{article}

\usepackage{arxiv}

\usepackage[utf8]{inputenc}
\usepackage[T1]{fontenc}
\usepackage{natbib}
\setcitestyle{authoryear,round,citesep={;},aysep={,},yysep={;}}
\usepackage{hyperref}
\usepackage{url}
\usepackage{booktabs}
\usepackage{amsfonts}
\usepackage{nicefrac}
\usepackage{microtype}
\usepackage{graphicx}
\usepackage{wrapfig}
\usepackage{placeins}
\usepackage{doi}
\usepackage{amsmath,amssymb,bm}
\usepackage{amsthm}
\usepackage{xcolor}
\usepackage{colortbl}
\definecolor{groupbg}{RGB}{235,242,246}
\definecolor{grouptext}{RGB}{101,132,150}
\usepackage{enumitem}
\usepackage{cleveref}
\usepackage{etoolbox}

\makeatletter
\patchcmd{\@maketitle}
    {\vskip 0.4in \@minus 0.1in \center{\@date} \vskip 0.2in}
    {\vskip 0.12in}
    {}
    {\PackageError{convstack}{Could not adjust title spacing}{Check arxiv.sty.}}
\makeatother

\theoremstyle{plain}
\newtheorem{proposition}{Proposition}
\newtheorem{theorem}{Theorem}
\theoremstyle{definition}
\newtheorem{assumption}{Assumption}
\crefname{proposition}{proposition}{propositions}
\Crefname{proposition}{Proposition}{Propositions}
\crefname{theorem}{theorem}{theorems}
\Crefname{theorem}{Theorem}{Theorems}
\crefname{assumption}{assumption}{assumptions}
\Crefname{assumption}{Assumption}{Assumptions}

\hypersetup{
    colorlinks=true,
    linkcolor=blue,
    citecolor=blue,
    urlcolor=blue
}

\graphicspath{{Figures/}}

\newcommand{\methodname}{ConvStack}

\newcommand{\synpoly}{SynPoly}
\newcommand{\syndot}{SynDot}

\title{Why MLLMs Struggle to Count: Overcoming Individuation and Aggregation Bottlenecks with \methodname{}}

\author{
    \textbf{Liwei~Che\textsuperscript{1} \quad
    Yihao~Quan\textsuperscript{1} \quad
    Sen~Fang\textsuperscript{1} \quad
    Hongyi~Wang\textsuperscript{1}}\\
    \textbf{Ranjay~Krishna\textsuperscript{2} \quad
    Ruixiang~Tang\textsuperscript{1} \quad
    Vladimir~Pavlovic\textsuperscript{1}}\\[0.5em]
    \normalfont\textsuperscript{1}Department of Computer Science, Rutgers University\\
    \normalfont\textsuperscript{2}Department of Computer Science, University of Washington
}
\date{}

\renewcommand{\shorttitle}{ConvStack: Overcoming Visual Counting Bottlenecks}

\hypersetup{
    pdftitle={Why MLLMs Struggle to Count: Overcoming Individuation and Aggregation Bottlenecks with ConvStack},
    pdfsubject={Computer Vision, Multimodal Large Language Models, Visual Counting},
    pdfauthor={Liwei Che, Yihao Quan, Sen Fang, Hongyi Wang, Ranjay Krishna, Ruixiang Tang, Vladimir Pavlovic},
    pdfkeywords={visual counting, vision transformers, spatial grounding, multimodal large language models}
}

\begin{document}
\maketitle

\input{Sections/abstract}
\keywords{Visual counting \and Vision transformers \and Spatial grounding \and Multimodal large language models}

\input{Sections/intro}
\input{Sections/related_work}
\input{Sections/method}
\input{Sections/exp}
\input{Sections/conclusion}

\begingroup
\setlength{\bibsep}{2pt plus 0.3pt minus 0.3pt}
\bibliographystyle{unsrtnat}
\bibliography{references}
\endgroup

\clearpage
\appendix
\numberwithin{equation}{section}
\section*{Appendix}
\input{Sections/appendix_datasets}
\input{Sections/appendix_method}
\input{Sections/appendix_ablation}
\input{Sections/appendix_count_geometry}

\end{document}

%% file: Sections/abstract.tex
\begin{abstract}
Multimodal Large Language Models (MLLMs) still struggle with fine-grained visual counting, especially as the number of objects increases. We study where these errors come from and identify two recurring bottlenecks. The first arises during object individuation: non-overlapping patchification can split a single object across multiple visual tokens, making it harder to form coherent object-level representations. The second appears during aggregation, where attention normalization can reduce the separation between representations of neighboring counts as numerosity grows. Based on these observations, we introduce ConvStack, a lightweight module that mixes local spatial information directly in the visual token space and injects the resulting features through zero-initialized residual connections. This design helps preserve local object structure while providing a more stable visual signal for downstream counting. Although ConvStack is trained only on counting data while updating about 0.5\% of the model parameters, its gains extend beyond counting to broader spatial understanding, while general visual performance remains largely intact. Our mechanistic analysis shows that ConvStack improves the separation of counting representations and the encoding of spatial information within MLLMs.
\end{abstract}

%% file: Sections/intro.tex
\section{Introduction}

\begin{figure}[!ht]
    \centering
    \includegraphics[width=\linewidth]{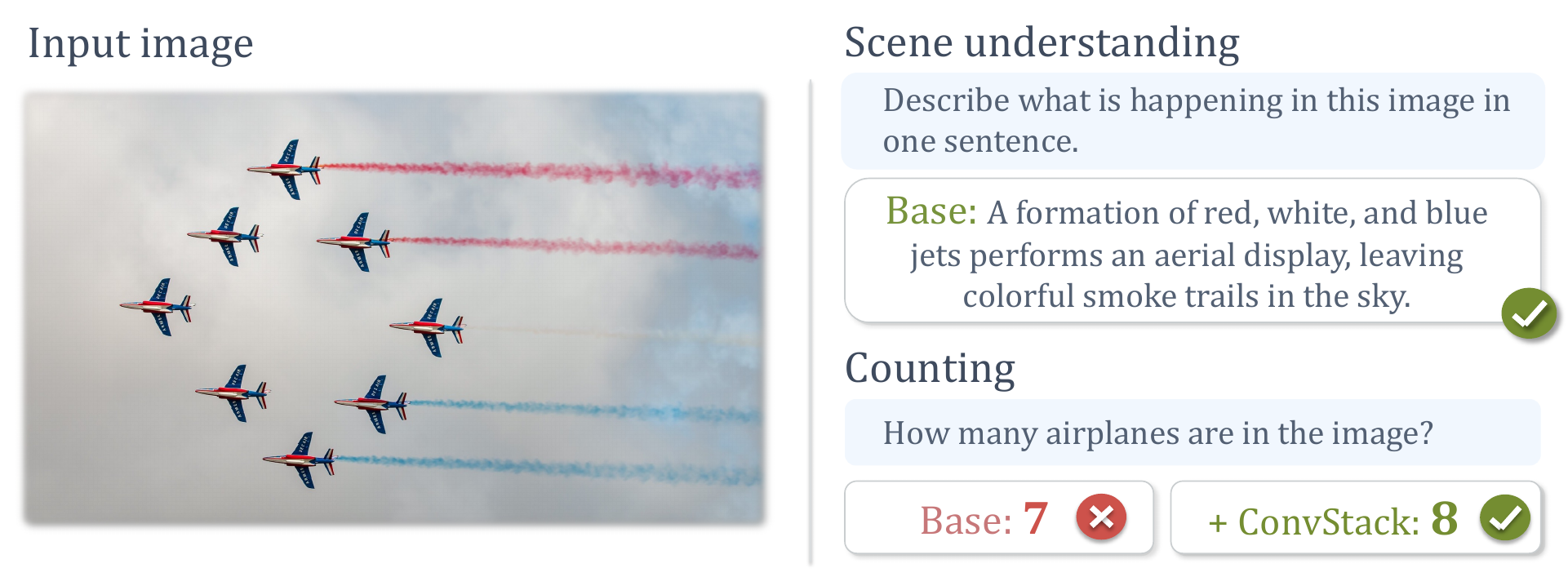}
    \caption{\textbf{Scene understanding versus counting.}
    Qwen3-VL-8B correctly describes the aerial display but counts seven airplanes instead of eight.
    \methodname{} (ours) recovers the correct count.}
    \label{fig:teaser}
\end{figure}
\FloatBarrier

Modern Multimodal Large Language Models (MLLMs)~\citep{bai2025qwen3vltechnicalreport,wang2025internvl3,gemma2024deepmind} inherit much of their visual capability from pretrained Vision Transformer (ViT) encoders \citep{dosovitskiy2021image,radford2021learning,liu2023visual}. While these encoders provide strong global representations that align well with language models, they remain surprisingly fragile on tasks requiring precise visual grounding. As shown in \Cref{fig:teaser}, an MLLM may accurately describe the semantic contents of an image while simultaneously failing to correctly answer how many objects are present~\citep{hasani2025understanding,che2026countingcircuitsmechanisticinterpretability,guo2025your}.

We use visual counting as a minimal, highly revealing probe for this broader grounding deficit. Counting requires far more than merely detecting the presence of foreground semantics. A capable model must first parse raw patches into discrete, localized object units (\textit{object individuation}) and then accumulate those units into a discrete numerosity (\textit{counting aggregation}). In this work, we conduct a mechanistic analysis that identifies severe bottlenecks at both of these critical stages (\Cref{sec:method}). First, we reveal an \textit{individuation bottleneck} stemming from image patchification. Because standard Vision Transformers project nonoverlapping patches independently, they severely struggle to group fragmented geometric features across patch boundaries, making it difficult to form coherent representations of individual objects. Second, we identify an \textit{aggregation bottleneck} driven by attention normalization. Even when objects are successfully individuated, they must still be aggregated by the network's attention mechanisms. Under a homogeneous single-head model with fixed object and background representations, attention that favors objects yields progressively smaller signal increments between adjacent counts. This compression reduces the hidden state distances between neighboring counts and increases the sensitivity required for exact readout.

These twin diagnostic insights naturally dictate our architectural solution: a robust counting model requires a local spatial prior to heal fragmented object boundaries, coupled with an additive signaling pathway that bypasses attention compression. Driven by this intuition, we propose \methodname{}, a lightweight module that pairs localized convolutional feature extraction with deep residual fusion. By applying depthwise convolutions directly within the visual token space, \methodname{} forces neighboring tokens to communicate, effectively bridging patch boundaries to construct clear, individuated object representations. To reinforce this visual evidence downstream, we inject the adapted features into the MLLM through residual connections. Zero-initialized adapter output projections preserve the input features at initialization. Our analysis gives sufficient conditions under which additive visual evidence supports stable count readout at the injection point.

Through experiments (\Cref{sec:experiments}), we evaluate how \methodname{} affects counting, spatial understanding, and general visual performance. Remarkably, by fine tuning \methodname{} \textit{exclusively} on counting datasets, the model acquires substantial improvements not only in dense object counting but also across a wide range of broader spatial understanding benchmarks. The vision and language backbones remain frozen while the adapters and visual merger are trained. General visual performance remains broadly comparable, with gains on some benchmarks and decreases on others.

In summary, this paper makes three primary contributions:
\begin{itemize}[leftmargin=2em]
    \item \textbf{Mechanistic Diagnosis:} Through theoretical analysis and synthetic diagnostics, we identify and formalize object individuation failure and counting aggregation collapse as two fundamental bottlenecks that systematically hinder the numeric and spatial reasoning capabilities of MLLMs.
    \item \textbf{Architectural Solution:} We introduce \methodname{}, a lightweight convolutional module that operates directly in the visual token space to explicitly aggregate and inject local geometric structures via residual connections.
    \item \textbf{Empirical Validation:} We demonstrate that training \methodname{} exclusively on counting tasks significantly boosts both dense counting in real images and broader spatial understanding in modern MLLMs (e.g., Qwen3-VL-8B), with broadly comparable general visual performance.
\end{itemize}

%% file: Sections/related_work.tex
\section{Related Work}

\noindent\textbf{Vision Transformers and convolutional bias.}
ViTs tokenize an image into nonoverlapping patches and rely on attention to mix spatial information \citep{dosovitskiy2021image}. This design provides global receptive fields but weakens early local inductive bias compared with CNNs \citep{lecun1998gradient}. Hybrid architectures reintroduce convolutional structure through convolutional attention projections, combinations of convolution and attention, and soft convolutional priors \citep{wu2021cvt,dai2021coatnet,dascoli2021convit}. Early convolutions improve ViT optimization and visual recognition \citep{xiao2021early}, while Convpass demonstrates the value of convolutional inductive bias for adaptation with few trainable parameters \citep{jie2022convolutional}. \methodname{} shares this emphasis on local spatial processing and convolutional adaptation. We examine how these ideas support object individuation and count readout, combining local visual features with residual routes and drawing on DeepStack's multilayer visual injection \citep{meng2024deepstack}.

\noindent\textbf{Visual counting.}
Counting has long been treated as a problem of localizing countable evidence and aggregating it into a cardinality estimate. Classical counting methods and methods for crowd counting often estimate density maps or evidence about individual objects before summation \citep{lempitsky2010learning}, while counting everyday objects emphasizes localization of individual instances and the difficulty of natural scenes \citep{chattopadhyay2017counting}. Human enumeration also distinguishes small exact counts from larger approximate number judgments \citep{kaufman1949discrimination,trick1994why,feigenson2004core}, motivating examination of how counting performance varies with numerosity. We use synthetic images to isolate whether an encoder forms object units under controlled changes in patch alignment, shape, color, and size, and examine performance on small versus large counts under balanced count frequencies. These experiments probe object individuation and count readout without assuming that a model implements human cognition or exhibits a universal subitizing boundary.

\noindent\textbf{Visual grounding in language models and shortcut diagnostics.}
Contrastive models such as CLIP \citep{radford2021learning} and MLLMs such as LLaVA \citep{liu2023visual} rely on visual encoders whose token representations are aligned to language. Spatial reasoning benchmarks and diagnostic datasets such as CLEVR \citep{johnson2017clevr} show that high overall VQA accuracy can coexist with shortcut behavior and weak compositional grounding. Our controlled results make the shortcut issue concrete: a model can appear to count perfectly when the data exposes a cue from the patch grid or object area. The proposed analysis therefore treats counting accuracy as meaningful only when the construction rules out such shortcuts and requires local object individuation.

%% file: Sections/method.tex
\section{Method}
\label{sec:method}

We view visual counting as two complementary computations: \emph{object individuation}, which forms distinct object representations, and \emph{information aggregation}, which combines the resulting evidence to determine their number.
We examine how local spatial features support object formation and how attention aggregates object evidence into a representation sensitive to count.

\subsection{Patchification Makes Object Individuation Harder}
\label{sec:patch-individuation}

\noindent\textbf{ViTs learn large counts more slowly.}
We define visual counting as predicting $n=|\mathcal O(x,q)|$, where $\mathcal O(x,q)$ denotes the set of objects in image $x$ specified by query $q$.
To study counting under controlled conditions, we use \syndot{} and \synpoly{}, introduced by \citet{che2026countingcircuitsmechanisticinterpretability}.
These datasets contain black disks and colored polygons, respectively, on white backgrounds, with $1$ to $10$ objects per image in our experiments.

\begin{figure}[!htbp]
    \centering
    \includegraphics[width=0.88\linewidth]{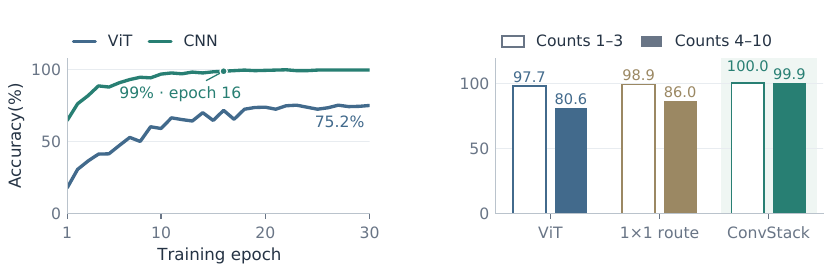}
    \caption{\textbf{Local spatial features improve counting.}
    Left: CNN learns faster than ViT on \synpoly{}.
    Right: \methodname{} nearly closes the accuracy gap between small and large counts. The $1\times1$ route control leaves a substantial gap.}
    \label{fig:method-locality}
\end{figure}

We begin with a simple count classification task on \synpoly{} to examine how the visual encoder affects counting.
We compare a standard ViT and a CNN, each with four layers, training both on \synpoly{} with balanced count frequencies and evaluating their frozen output embeddings at each epoch using a linear classification head.
Under the same training protocol, the CNN reaches $99\%$ probe accuracy at epoch $16$, whereas the ViT reaches only $75.2\%$ by epoch $30$ (\Cref{fig:method-locality}, left).
The ViT achieves much lower accuracy for large counts than for small counts.
In a separate three-seed comparison with balanced training counts, its mean accuracy is $97.7\%$ for counts $1$ to $3$ but $80.6\%$ for counts $4$ to $10$.

\noindent\textbf{Moving the same disk changes the learning problem.}
We suspect that this difference stems from the ViT's patchification of the input image.
We next use \syndot{} to isolate patch geometry.
A ViT divides an image $x\in\mathbb R^{H\times W\times3}$ into $M=HW/p^2$ nonoverlapping patches of size $p\times p$.
Let $P_i$ denote the image region of patch $i$ and $S_k$ the support of target object $k$.
The fraction of that object assigned to each patch is
\begin{equation}
    \rho_{ki}=\frac{|S_k\cap P_i|}{|S_k|},\qquad
    \sum_{i=1}^{M}\rho_{ki}=1.
    \label{eq:patch-fragments}
\end{equation}
We fix the disk radius to $7$ pixels and patch width to $16$ pixels, and vary placement relative to the grid (\Cref{fig:method-patch-geometry}).
At a \emph{patch center}, one token contains the entire disk: $\max_i\rho_{ki}=1$.
At a \emph{shared edge}, two tokens contain equal halves: $\rho_{ki}=\rho_{kj}=1/2$.
With \emph{random placement}, splits are generally asymmetric. The chosen sizes of disks and patches often leave a dominant fragment in one token.

The displayed center, random, and edge runs first reach $100\%$ probe accuracy in every class at epochs $1$, $9$, and $14$, respectively, and all finish at $100\%$.
When each disk lies within a distinct patch, the object count equals the number of occupied patches.
Splitting disks across patch boundaries breaks this direct correspondence and distributes object evidence across multiple tokens.

\begin{figure}[!htbp]
    \centering
    \includegraphics[width=0.84\linewidth]{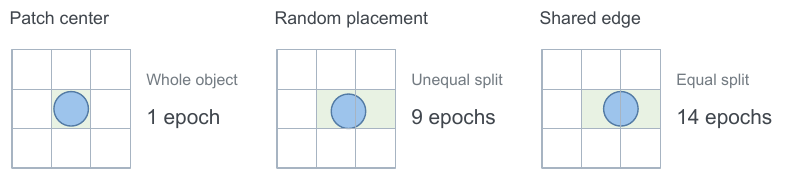}
    \caption{\textbf{Patch geometry changes learning speed.}
    Numbers indicate the first epoch with $100\%$ accuracy across all count classes.}
    \label{fig:method-patch-geometry}
\end{figure}
\FloatBarrier

\noindent\textbf{Overlapping filters provide a local prior.}
These observations motivate introducing a local spatial prior into the visual encoder.
Standard patch embedding projects each nonoverlapping patch independently, without combining information across patch boundaries.
Shared, overlapping convolutional filters aggregate neighboring pixels and provide a natural way to introduce this prior \citep{xiao2021early,jie2022convolutional}.
For the controlled ViT experiments, we instantiate this idea with a convolutional branch operating on image pixels that extracts hierarchical local features and injects them into the ViT through residual connections (\Cref{app:object-formation}).

To examine the role of learned spatial filtering, we compare \methodname{} with a control that replaces each $3\times3$ convolution with a $1\times1$ convolution, equivalent to a shared linear layer applied independently at each spatial location.
This control retains the same depth, channel widths, pooling operations, and injection points.
On \synpoly{}, \methodname{} achieves $99.9\%$ mean accuracy across three seeds and reduces the accuracy gap between small and large counts from $17.0$ to $0.1$ percentage points.
The $1\times1$ route control retains a gap of $12.9$ percentage points (\Cref{fig:method-locality}, right).
These results support learned spatial filtering as a useful inductive bias for visual counting.

\begin{table}[!htbp]
    \caption{\textbf{Counting on \synpoly{}.} Results after $15$ epochs with $500$ training samples per class. Metric definitions and the evaluation protocol are given in \Cref{app:counting-metrics}.}
    \label{tab:synpoly-main}
    \begin{center}
        \begin{tabular}{lrrrr}
            \toprule
            Model & Accuracy & MAE & RMSE & Class sep. \\
            \midrule
            ViT & $0.779$ & $0.287$ & $0.408$ & $1.570$ \\
            CNN & $0.998$ & $0.027$ & $0.060$ & $2.995$ \\
            ViT+\methodname{} & $0.998$ & $0.009$ & $0.037$ & $5.835$ \\
            \bottomrule
        \end{tabular}
    \end{center}
\end{table}

\begin{figure}[!htbp]
    \centering
    \includegraphics[width=0.78\linewidth]{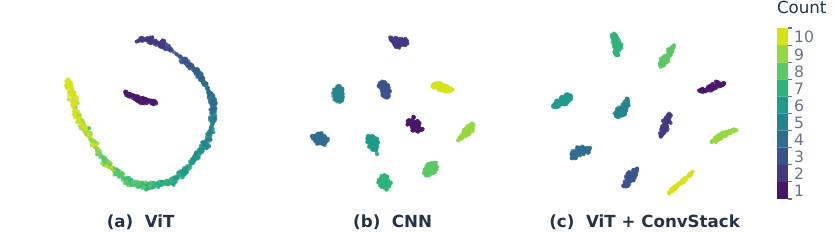}
    \caption{\textbf{ConvStack improves the separation of count classes.}
    t-SNE visualizations of \synpoly{} image embeddings, colored by object count.
    \methodname{} forms compact, distinct clusters and reduces the class overlap seen in ViT, especially at larger counts.}
    \label{fig:method-count-geometry}
\end{figure}
\FloatBarrier

\noindent\textbf{ConvStack improves the structure of count classes and counting performance.}
We next examine how this spatial prior affects count representations by comparing ViT, CNN, and ViT+\methodname{} trained on \synpoly{} for $15$ epochs with $500$ images per count class.
The t-SNE \citep{vandermaaten2008visualizing} visualizations of their final image embeddings show substantial overlap among the ViT's count classes, especially at larger counts (\Cref{fig:method-count-geometry}).
Adding \methodname{} produces more compact clusters of images with the same count and clearer separation between count classes.
Consistent with this visual pattern, the class separation score measured on the image embeddings increases from $1.570$ to $5.835$, approximately $3.7\times$ the ViT score and above the CNN's $2.995$ (\Cref{tab:synpoly-main}).
Counting accuracy also rises from $77.9\%$ to $99.8\%$, a gain of $21.9$ percentage points, matching the CNN baseline.
In the separate comparison across three seeds (\Cref{fig:method-locality}, right), \methodname{} reduces the accuracy gap between small and large counts to $0.1$ percentage points, versus $12.9$ percentage points for the $1\times1$ route control.
These results show that local spatial filtering strengthens visual count representations, particularly at larger counts.

\subsection{Count Separation Through Visual Residuals}
\label{sec:count-readout}

\begin{figure}[!htbp]
    \centering
    \includegraphics[width=\linewidth]{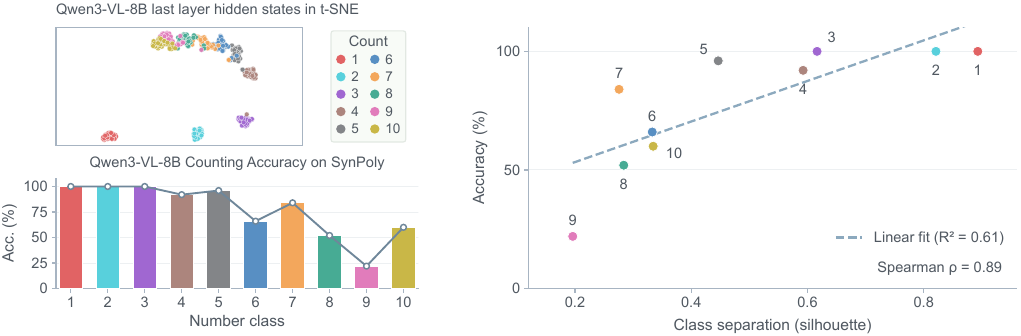}
    \caption{\textbf{Separation of count classes tracks counting accuracy.}
    Qwen3-VL-8B on \synpoly{}: t-SNE of final layer representations and accuracy for each class (left), and accuracy versus hidden state silhouette (right).}
    \label{fig:count-separation-motivation}
\end{figure}

We further examine the LLM backbone of Qwen3-VL-8B and observe a positive association between mean hidden state silhouette and counting accuracy across the ten count classes ($r=0.78$, $R^2=0.61$, \Cref{fig:count-separation-motivation}).
We compare these representations with \methodname{} using silhouette and adjacent class separation in \Cref{app:count-geometry}.
This observation motivates us to investigate how attention aggregation~\citep{che2026countingcircuitsmechanisticinterpretability} preserves distinctions between counts.

\noindent\textbf{The contribution of an additional object.}
To isolate aggregation from individuation, consider $n$ objects each contained in a distinct patch: $\rho_{ki}\in\{0,1\}$ for all $k=1,\ldots,n$ and $i=1,\ldots,M$, with no patch containing more than one object.
Assuming one visual token per patch at the LLM input, the number of object tokens is then $N=n$.
Consider a single attention head over $M$ image tokens, with $n$ homogeneous object tokens and fixed object and background representations (\Cref{app:attention-proof}).
Writing $r=\exp(s_o-s_b)$ for the attention weight of an object token relative to a background token, the total attention mass on object tokens is $A(n)=nr/[M+n(r-1)]$.
The attention increment from one additional object is
\begin{equation}
    \Delta A(n)=A(n+1)-A(n)
       =\frac{rM}{[M+n(r-1)][M+(n+1)(r-1)]},\quad 0\leq n<M.
    \label{eq:count-margin}
\end{equation}
For attention that favors objects ($r>1$), $\Delta A(n)$ decreases as the number of objects grows.
With a fixed incoming residual $h_{\mathrm{in}}$ and output projection $W_O$, the hidden state after attention is
\begin{equation}
    h_n=h_{\mathrm{in}}+W_O\bigl[v_b+A(n)(v_o-v_b)\bigr].
    \label{eq:attention-hidden-state}
\end{equation}
The difference between adjacent counts is therefore $h_{n+1}-h_n=\Delta A(n)W_O(v_o-v_b)$, giving
\begin{equation}
    D_n=\|h_{n+1}-h_n\|_2=K\Delta A(n),
    \qquad K=\|W_O(v_o-v_b)\|_2.
    \label{eq:hidden-count-margin}
\end{equation}
Thus, for $K>0$, shrinking attention increments translate directly into smaller hidden state distances between adjacent counts, even with perfectly individuated objects.
This compression helps explain why hidden state clusters for larger counts increasingly overlap and become harder to distinguish in \Cref{fig:count-separation-motivation}.
Recovering exact counts from these closely spaced states requires a scalar decoder with sensitivity at least $1/D_n$ (\Cref{app:attention-proof,app:robustness-proof}), making readout increasingly sensitive to hidden state perturbations under this model.



To provide a more reliable source of visual evidence, we adopt a residual injection strategy~\citep{meng2024deepstack}. We augment the LLM hidden state $h_n$ after attention with a projected visual feature $r_n$ for count $n$, scaled by a fixed gate $g$, yielding the fused state $\widetilde h_n$. We formalize ``additive evidence'' by assuming that the visual route contributes a signal along a shared unit direction that grows linearly with count, with slope $c>2\eta\geq0$ and error bounded by $\eta$. We also assume that the base stream's variation between adjacent counts along this direction is bounded by $\beta$.
Under these conditions, the fused state $\widetilde h_n$ and its effective separation margin $\kappa$ become:
\begin{equation}
    \widetilde h_n=h_n+gr_n,\qquad
    \kappa=|g|(c-2\eta)-\beta,\qquad \text{for } g\neq0.
    \label{eq:evidence-residual}
\end{equation}

\begin{theorem}[Stable count readout from preserved residual evidence]
\label{thm:residual-count-readout}
Assuming the evidence and interference bounds outlined above hold uniformly up to some maximum count $n_{\max}$ (\Cref{ass:residual-evidence}), any configuration where $\kappa>0$ ensures the residual states are separated by a margin completely independent of the count itself. Under these conditions, there exists an exact scalar decoder $F:\mathbb R^d\to\mathbb R$ with $F(\widetilde h_n)=n$ such that:
\begin{equation}
    \|\widetilde h_m-\widetilde h_n\|_2\geq\kappa|m-n|,\qquad
    \operatorname{Lip}(F)\leq1/\kappa.
    \label{eq:residual-margin}
\end{equation}
By rounding $F$ to the nearest integer, one can exactly recover the true count even when the states are corrupted by any additive noise with $\ell_2$ norm strictly less than $\kappa/2$.
\end{theorem}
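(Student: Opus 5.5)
The argument is a projection onto the shared evidence direction. Let $u\in\mathbb R^d$ be the unit vector from \Cref{ass:residual-evidence}. The assumption gives, for $0\le n\le n_{\max}$, an offset $a$ and errors $|\epsilon_n|\le\eta$ with
\[
\langle u, r_n\rangle = a + cn + \epsilon_n ,
\]
and a bound $|\langle u, h_{n+1}-h_n\rangle|\le\beta$ on the base stream. I will first reduce to the scalar sequence $s_n=\langle u,\widetilde h_n\rangle$. Because $\|u\|_2=1$, Cauchy--Schwarz gives $\|\widetilde h_m-\widetilde h_n\|_2\ge|s_m-s_n|$. So it suffices to show that $s_n$ is strictly monotone with increments of size at least $\kappa$.

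\textbf{Adjacent increments.} For $m=n+1$,
\[
s_{n+1}-s_n=\langle u,h_{n+1}-h_n\rangle+g\bigl(c+\epsilon_{n+1}-\epsilon_n\bigr).
\]
The second term has sign $\operatorname{sgn}(g)$ and magnitude at least $|g|(c-2\eta)>0$, using $c>2\eta$. The first term has magnitude at most $\beta$. Hence
\[
\operatorname{sgn}(g)(s_{n+1}-s_n)\ge|g|(c-2\eta)-\beta=\kappa>0 .
\]
All increments therefore share the same sign. Summing the telescoping increments from $n$ to $m$ gives $|s_m-s_n|\ge\kappa|m-n|$, which is the first claim in \eqref{eq:residual-margin}.

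\textbf{Main obstacle: the decoder.} The delicate step is making $F$ exact, defined on all of $\mathbb R^d$, and $1/\kappa$-Lipschitz. I will build it in two stages.

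1. Define a one-dimensional map $\phi$ on the finite set $\{s_0,\dots,s_{n_{\max}}\}$ by $\phi(s_n)=n$. Monotonicity of $s_n$ and the spacing bound make $\phi$ well defined and $1/\kappa$-Lipschitz on that set.
2. Extend $\phi$ to all of $\mathbb R$ by piecewise-linear interpolation between consecutive points and constant extension outside $[\min_n s_n,\max_n s_n]$. Each linear piece has slope at most $1/\kappa$, so the extension stays $1/\kappa$-Lipschitz. (Alternatively, a McShane extension works.)
3. Set $F(x)=\phi(\langle u,x\rangle)$. Since $x\mapsto\langle u,x\rangle$ is $1$-Lipschitz, $\operatorname{Lip}(F)\le1/\kappa$, and $F(\widetilde h_n)=n$ by construction.

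\textbf{Robustness.} Suppose the observed state is $\widetilde h_n+\xi$ with $\|\xi\|_2<\kappa/2$. Then
\[
|F(\widetilde h_n+\xi)-n|\le\|\xi\|_2/\kappa<1/2 ,
\]
so rounding to the nearest integer returns $n$ exactly. This holds under the convention that the true count satisfies $n\le n_{\max}$.

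The only assumption-dependent point I need to verify is that \Cref{ass:residual-evidence} gives the evidence error pointwise, as $|\epsilon_n|\le\eta$, so that adjacent differences are bounded by $2\eta$. That pointwise form is what produces the factor $c-2\eta$. If the assumption were instead stated directly on differences, the same argument would go through with $\eta$ in place of $2\eta$.
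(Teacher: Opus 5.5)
Your proposal is correct and follows the paper's proof essentially step for step: project onto the evidence direction, bound each adjacent increment below by $|g|(c-2\eta)-\beta=\kappa$, telescope, and take $F$ to be the piecewise-linear interpolant (constant outside the range) composed with the projection, which also gives the rounding robustness. The only cosmetic difference is in handling the sign of $g$: the paper projects onto $\operatorname{sign}(g)\,u$ so the projected sequence is increasing, whereas you carry $\operatorname{sgn}(g)$ through the increments and let the interpolation handle a possibly decreasing sequence.
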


When the additive visual signal outweighs the base stream's interference ($\kappa>0$), the theorem guarantees a count separation margin and the existence of a decoder with sensitivity at most $1/\kappa$ throughout the stated count range.

This is a sufficient condition at the injection point. Real images need not satisfy the evidence assumptions, and the theorem does not guarantee that training learns these features or that subsequent language layers preserve them (\Cref{app:residual-readout-proof}). It motivates extracting local visual cues and reinforcing them at multiple network depths.

\subsection{The ConvStack Architecture}
\label{sec:convstack}

Driven by the need to continuously inject stable counting evidence, we propose \methodname{}, an architecture that pairs localized convolutional feature extraction with a deep residual fusion mechanism. To enhance the spatial reasoning capabilities of pretrained MLLMs, \methodname{} operates directly in the visual token space (\Cref{fig:convstack-overview}).

\begin{figure}[!htbp]
    \centering
    \includegraphics[width=0.84\linewidth]{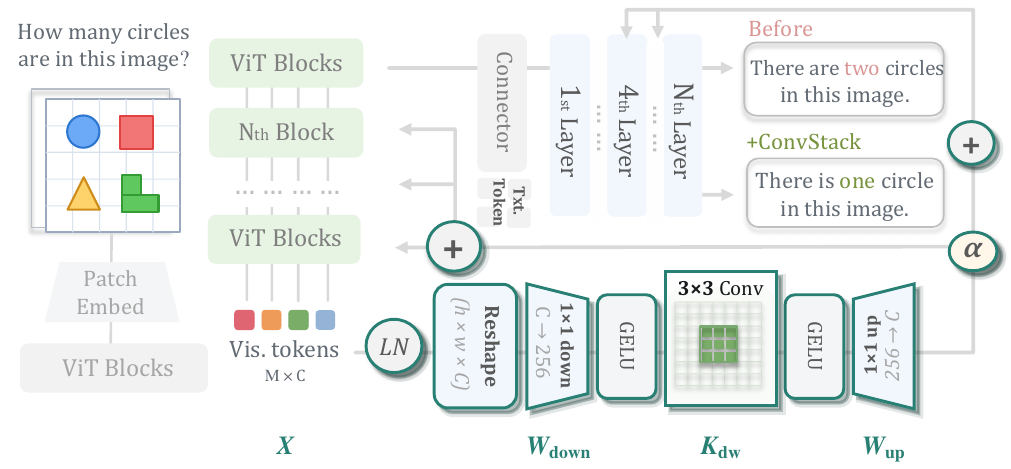}
    \caption{\textbf{Overview of \methodname{}.} Teal outlines highlight the \methodname{} components. Variable names correspond to \Cref{eq:token-convstack}.}
    \label{fig:convstack-overview}
\end{figure}

At each adapter location, we reshape the visual tokens into a 2D grid $X$ and update it with a convolutional bottleneck:
\begin{equation}
    \mathcal C_\theta(X)=W_{\mathrm{up}}*
       \operatorname{GELU}\left(K_{\mathrm{dw}}*
       \operatorname{GELU}(W_{\mathrm{down}}*\operatorname{LN}(X))\right),
       \qquad \widetilde X=X+\alpha\mathcal C_\theta(X).
    \label{eq:token-convstack}
\end{equation}
Here, $W_{\mathrm{down}}$ and $W_{\mathrm{up}}$ are $1\times1$ projections that compress and restore the channel dimension, while the $3\times3$ depthwise convolution $K_{\mathrm{dw}}$ mixes neighboring tokens in the compressed space. A learnable scalar $\alpha$ scales the correction. The updated features $\widetilde X$ are flattened into visual tokens for fusion with the hidden states at selected LLM layers. We initialize $W_{\mathrm{up}}$ to zero so that each adapter starts as an identity mapping, preserving the pretrained features at initialization.

%% file: Sections/exp.tex
\section{Experiments}
\label{sec:experiments}

\subsection{Experimental Setup}
\label{sec:experimental-setup}

\noindent\textbf{Data and baselines.}
We implement \methodname{} on Qwen3-VL-8B \citep{bai2025qwen3vltechnicalreport} and compare it with Spatial-MLLM-3B \citep{wu2025spatialmllmboostingmllmcapabilities}, LLaVA-SP Cropping-7B \citep{Lou_2025_ICCV}, and Honeybee-7B \citep{cha2023honeybee}.
We assess counting in real images on PixMo-Count \citep{deitke2025molmo} and CountBenchQA \citep{beyer2024paligemma}. We evaluate spatial understanding on CV-Bench \citep{tong2024cambrian}, SAT-real \citep{ray2024sat}, SpatialEval \citep{wang2024picture}, and SAT-Spatial \citep{batra2025satspatialvqa}. We assess general visual capability on POPE \citep{li2023evaluating}, MME \citep{fu2026mme}, MMBench-EN \citep{liu2024mmbench}, MMMU \citep{yue2024mmmu}, RealWorldQA \citep{xai2024realworldqa}, and MathVista \citep{lu2024mathvista}.
Evaluation uses deterministic decoding and each model's image processor.
Dataset descriptions, evaluation splits, and training variants are detailed in \Cref{sec:datasets}.

\noindent\textbf{Training.}
Our standard training mixture combines examples from the PixMo-Count training split with $2{,}000$ \synpoly{} and $2{,}000$ \syndot{} examples, following the synthetic datasets introduced by \citet{che2026countingcircuitsmechanisticinterpretability}.
For Qwen3-VL, we train convolutional adapters at vision layer $18$ (ablation in \Cref{app:vit-injection}) and all native DeepStack levels, together with the visual merger.
The remaining pretrained parameters are frozen.
We train for one epoch with an effective batch size of $32$, a learning rate of $5\times10^{-5}$, and loss on tokens in counting answers.

\subsection{Evaluation}
\label{sec:evaluation}

\noindent\textbf{Counting and Spatial Understanding.}
As shown in \Cref{tab:counting-spatial}, \methodname{}-8B consistently outperforms both the base Qwen3-VL-8B and alternative spatial adapters. The most striking gains are in dense counting, where it achieves $75.19\%$ on PixMo-Count, an absolute improvement of $9.73$ percentage points over the base model ($65.46\%$). Furthermore, the injected local features systematically benefit broader spatial reasoning, establishing top scores on benchmarks like CV-Bench Spatial ($93.20\%$) and SAT-Spatial ($76.81\%$). Notably, \methodname{} proves superior to existing workarounds like Pool SFE or Crop+DFI~\citep{Lou_2025_ICCV}, and avoids the severe performance collapse caused by aggressive strategies for token reduction, such as C-Abstractor~\citep{cha2023honeybee}.

\begin{table}[htbp]
\centering
\small
\setlength{\tabcolsep}{3pt}
\caption{\textbf{Evaluation on counting and spatial understanding benchmarks.} CV-Bench Spatial averages Relation, Depth, and
Distance accuracy. The best result is
\textbf{bolded}, and the second best is
\underline{underlined}. 
Dashes denote results that are unavailable or incompatible with the scorer.}
\label{tab:counting-spatial}

\resizebox{\linewidth}{!}{%
\begin{tabular}{@{}lrrrrrr@{}}
\toprule
Method
& PixMo-Count
& CountBenchQA
& \shortstack{CV-Bench\\Spatial}
& SAT-real
& SpatialEval
& SAT-Spatial \\
\midrule

\rowcolor{groupbg}
\multicolumn{7}{@{}c@{}}{%
  \textcolor{grouptext}{\bfseries\itshape Models Specialized for Spatial Tasks}%
} \\

Spatial-MLLM-3B
& 52.67 & 65.78 & 75.22 & 60.00 & 50.00 & 58.76 \\

LLaVA-SP Cropping-7B
& 42.94 & 45.62 & 63.59 & 52.00 & 31.33 & 60.61 \\

Honeybee-7B (C-Abstractor)
& 37.98 & 54.79 & 60.50 & 51.00 & 28.00 & 57.63 \\

\midrule

\rowcolor{groupbg}
\multicolumn{7}{@{}c@{}}{%
  \textcolor{grouptext}{\bfseries\itshape Qwen3-VL-8B Scale}%
} \\

Qwen3-VL-8B
& 65.46 & \underline{89.82} & 92.34 & 59.33 & 61.33 & \underline{76.79} \\

Qwen3-VL-8B + Crop+DFI
& 71.43 & 85.74 & \underline{93.04} & \underline{62.13}
& \underline{65.22} & -- \\

Qwen3-VL-8B + Pool SFE
& \underline{73.52} & 89.21 & 92.40 & 58.67 & 64.64 & 75.05 \\

Qwen3-VL-8B + C-Abstractor
& 36.95 & 29.53 & -- & 29.33 & 30.46 & -- \\

\methodname{}-8B
& \textbf{75.19} & \textbf{90.22} & \textbf{93.20} & \textbf{63.33}
& \textbf{66.67} & \textbf{76.81} \\

\bottomrule
\end{tabular}%
}
\end{table}

\noindent\textbf{General Visual Capability.}
As shown in \Cref{tab:general-visual}, \methodname{}-8B maintains broadly comparable general visual performance to Qwen3-VL-8B, with improvements on four of the six benchmarks. POPE F1 rises from $87.83\%$ to $89.52\%$, MME from $2219.6$ to $2251.0$, and RealWorldQA from $69.93\%$ to $70.33\%$, while MMBench and MathVista show small decreases.

\begin{table}[htbp]
    \centering
    \small
    \setlength{\tabcolsep}{3pt}
    \caption{\textbf{General visual capability.} POPE reports F1. Scores are percentages except MME. The better result in each column is bolded.}
    \label{tab:general-visual}
    \begin{tabular*}{\linewidth}{@{\extracolsep{\fill}}lrrrrrr@{}}
        \toprule
        Method & POPE & MME & MMBench & MMMU & RealWorldQA & MathVista \\
        \midrule
        Qwen3-VL-8B & 87.83 & 2219.6 & \textbf{88.67} & 59.00 & 69.93 & \textbf{66.60} \\
        \methodname{}-8B & \textbf{89.52} & \textbf{2251.0} & 88.60 & \textbf{59.89} & \textbf{70.33} & 65.00 \\
        \bottomrule
    \end{tabular*}
\end{table}

\FloatBarrier
\subsection{Ablation Study}
\label{sec:pretrained-ablation}

\Cref{tab:pretrained-ablation} compares \methodname{} against three alternative fine tuning strategies on Qwen3-VL-8B. All variants share the exact same training data and hyperparameters. \emph{ViT-stack only} trains the ConvStack with ViT layer residual connections only. \emph{LLM-stack only} directly routes the ConvStack head output to the residual connection of the LLM backbone. \emph{$1\times1$ convolution kernel} replaces the $3\times3$ depthwise convolution with a $1\times1$ projection, removing spatial mixing within the adapter. \emph{LoRA} applies LoRA throughout the 8B MLLM. Our method trains $42.8$M parameters in its residual adapters and visual merger. The results confirm that explicit local token mixing ($3\times3$ convolution) is critical.

\begin{table}[!htbp]
    \centering
    \small
    \setlength{\tabcolsep}{12pt}
    \caption{Ablation study (accuracy \%).}
    \label{tab:pretrained-ablation}
    \begin{tabular}{@{}lrr@{}}
        \toprule
        Configuration & PixMo-Count & SAT-real \\
        \midrule
        \textbf{\methodname{}-8B} & \textbf{75.19} & \textbf{63.33} \\
        \quad ViT-stack only & 73.09 & 60.67 \\
        \quad LLM-stack only & 74.05 & 61.33 \\
        \quad $1\times1$ convolution kernel & 73.09 & 59.33 \\
        \quad LoRA & 73.85 & 62.00 \\
        \bottomrule
    \end{tabular}
\end{table}
\FloatBarrier

\subsection{Mechanistic Analysis}
\label{sec:spatial-analysis}

\noindent\textbf{Internal representations of counting.}
To understand how \methodname{} improves counting, we analyze its internal representations using $500$ \synpoly{} images with object counts ranging from $1$ to $10$.
As shown in \Cref{fig:synpoly-final-tsne}, \methodname{} produces clearer clusters for each count, alongside an accuracy increase from $56.8\%$ to $98.8\%$ for counts $6$ to $10$.
In the original hidden space, mean cosine silhouette rises from $0.479$ to $0.707$, and separation improves for every adjacent count pair (\Cref{app:count-geometry}).

\begin{figure}[!ht]
    \centering
    \includegraphics[width=\linewidth]{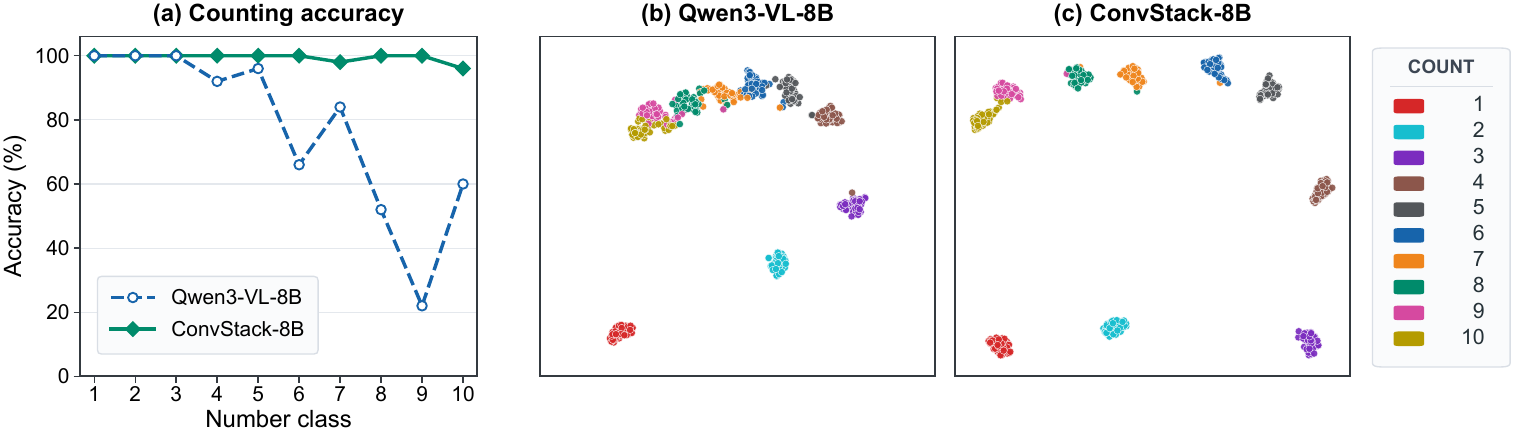}
    \caption{\textbf{Counting accuracy and representations on \synpoly{}.}
    (a) Accuracy by number class, with $50$ images per class.
    (b,c) t-SNE of the final layer representation of the last prompt token for counts $1$ to $10$.}
    \label{fig:synpoly-final-tsne}
\end{figure}

\noindent\textbf{Probing spatial features via S-Space.}
To verify that these representation benefits extend beyond counting to general spatial grounding, we apply S-Space~\citep{mirros2026sspace} to probe intermediate model representations on $500$ MSCOCO images. S-Space allows us to linearly read out object coordinates (horizontal, vertical, and depth) directly from the hidden states.

\Cref{fig:sspace-analysis}(a) highlights a representative failure of the base model, which incorrectly judges the dining table to be farther away than the refrigerator. \methodname{} corrects this error, and looking at the internal S-Space readout (\Cref{fig:sspace-analysis}(b)), we see the corresponding depth probe shift dramatically from $-0.28$ (incorrectly farther) to $+0.08$ (correctly closer). Zooming out to the full set of $500$ images (\Cref{fig:sspace-analysis}(c)), \methodname{} yields consistent aggregate gains in representational accuracy across all three dimensions ($+3.5\%$ horizontal, $+2.3\%$ vertical, and $+2.1\%$ depth). This confirms that explicitly mixing local visual features forces the network to maintain a much stronger internal grounding of physical object relations.

\begin{figure}[!htbp]
    \centering
    \includegraphics[width=\linewidth]{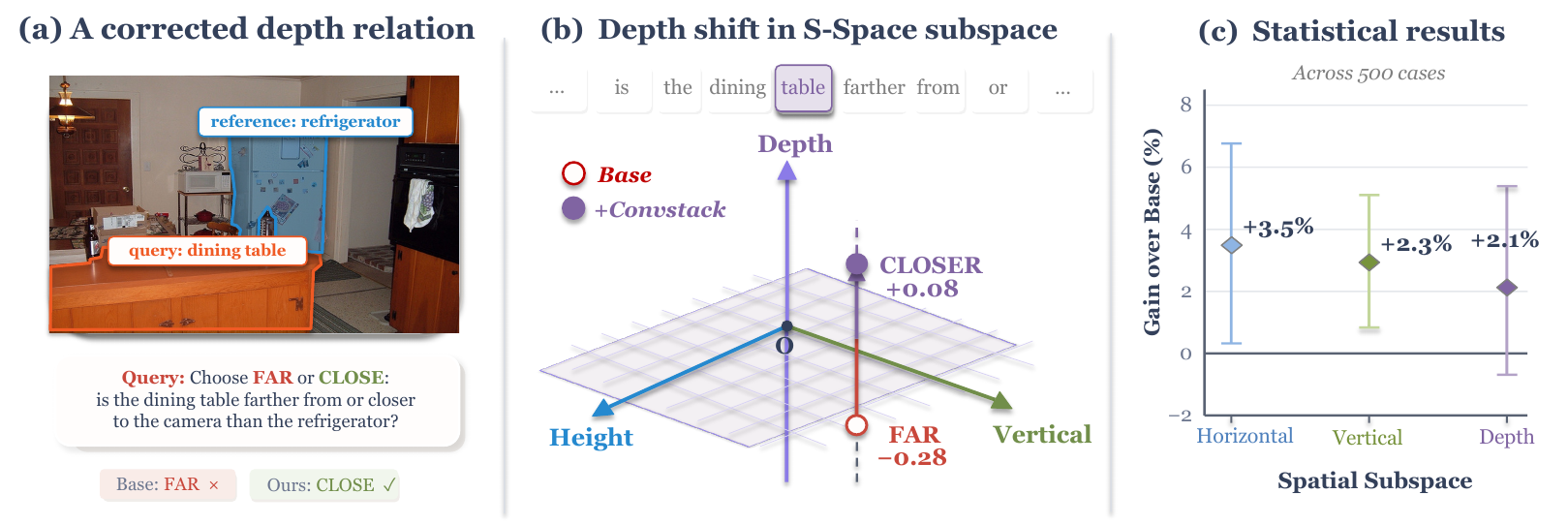}
    \caption{\textbf{Spatial understanding analyzed with S-Space on $500$ MSCOCO images.}
    (a) \methodname{} corrects the base model's depth judgment for a dining table relative to a refrigerator.
    (b) The corresponding S-Space depth readout shifts from $-0.28$ (farther) to $+0.08$ (closer).
    (c) Aggregate gains over the base model are positive for horizontal, vertical, and depth evaluations.}
    \label{fig:sspace-analysis}
\end{figure}
\FloatBarrier

%% file: Sections/conclusion.tex
\section{Conclusion}

In this work, we identified two critical bottlenecks in the object individuation and counting aggregation processes that cause MLLMs to struggle with visual counting, especially for large numbers. To address these limitations, we proposed \methodname{}, an architecture that effectively improves the separation of internal counting representations. Remarkably, by training exclusively on counting tasks, the model achieves significant improvements in both visual counting and broader spatial understanding. These gains are accompanied by broadly comparable general visual performance, supporting \methodname{} as a practical approach to strengthening numeric and spatial reasoning in multimodal foundation models.

%% file: Sections/appendix_datasets.tex
\section{Datasets}
\label{sec:datasets}

We describe the datasets used for training, counting diagnostics, and evaluation. The standard adaptation mixture uses the PixMo-Count training split and 2,000 examples each from \synpoly{} and \syndot{}.

\subsection{Counting Data and Controlled Diagnostics}

\noindent\textbf{PixMo-Count.}
PixMo-Count \citep{deitke2025molmo} contains natural images annotated with a target object category and its count, with point annotations provided for training. Its validation and test splits, verified by human annotators, cover counts from 2 to 10. We use examples from the training split for counting adaptation and evaluate accuracy for exact counts on the available test images. The current Qwen3 evaluation contains 524 test examples.

\noindent\textbf{\synpoly{}.}
\synpoly{}, introduced by \citet{che2026countingcircuitsmechanisticinterpretability}, contains colored polygons on a white background with variation in shape, size, color, and position. It tests counting under appearance variation while keeping the scene simple. We use 2,000 examples in the standard adaptation mixture. The controlled encoder experiments use balanced counts from 1 to 10 and resolutions chosen for each experiment. The counting suite for pretrained models additionally uses 380 generated test images at $512\times512$ resolution, with 20 examples for each count from 1 to 19. The representation analysis in \Cref{fig:synpoly-final-tsne} uses a separate set of 500 images, with 50 examples per count from 1 to 10.

\noindent\textbf{\syndot{}.}
\syndot{}, also introduced by \citet{che2026countingcircuitsmechanisticinterpretability}, contains black disks on a white background. Its simple appearance makes it useful for isolating the relationship between object placement and patch geometry. We use 2,000 examples in the standard adaptation mixture. For controlled diagnostics, we construct variants with objects at patch centers, at patch edges, or at random positions, with counts from 1 to 10. The counting suite for pretrained models uses 380 generated $512\times512$ test images covering counts from 1 to 19, with 20 examples per count.

\noindent\textbf{CountBenchQA.}
CountBenchQA \citep{beyer2024paligemma} converts CountBench \citep{paiss2023teaching} into a visual question answering task by adding questions about object quantity. The original images contain between 2 and 10 instances of a target object. Our additional counting evaluations use the public test release of 491 images and accuracy for exact counts.

\subsection{Spatial Understanding Benchmarks}

\noindent\textbf{CV-Bench.}
CV-Bench \citep{tong2024cambrian} recasts annotated images from ADE20K, COCO, and Omni3D as visual questions about counting, spatial relations, depth order, and relative distance. Our CV-Bench Spatial score averages accuracy equally over Relation (650 questions), Depth (600), and Distance (600), covering 1,850 questions. The counting category is excluded from this spatial score.

\noindent\textbf{SAT-real.}
SAT-real is the test split of Spatial Aptitude Training (SAT) that uses real images \citep{ray2024sat} evaluating spatial relationships and changes involving objects or viewpoints, with one or two images per question.

\noindent\textbf{SpatialEval.}
SpatialEval \citep{wang2024picture} evaluates spatial relationships, position, counting, and navigation through tasks involving maps, grids, mazes, and real images. Our experiments use a fixed subset of 150 visual questions with multiple choices shared across models and report answer accuracy on this subset.

\noindent\textbf{SAT-Spatial.}
We use the public SAT-Spatial-VQA release \citep{batra2025satspatialvqa}, which contains 15,000 examples for spatial reasoning, each consisting of an image, a question, and an answer. Although the release names its split \texttt{train}, we use these examples only for evaluation. We retain the native questions and score normalized exact matches to the answer values.

\subsection{General Visual Benchmarks}

\noindent\textbf{POPE.}
POPE \citep{li2023evaluating} measures object hallucination through yes/no questions about whether a specified object appears in an image. We use the random, popular, and adversarial subsets based on COCO, each containing 3,000 questions, and report F1 over all 9,000 questions.

\noindent\textbf{MME.}
MME \citep{fu2026mme} evaluates perception and cognition across 14 tasks, including object recognition, spatial position, text recognition, and reasoning. Our evaluation contains 2,374 yes/no questions arranged in 1,187 pairs. Each task combines accuracy on individual questions with the accuracy of answering both questions in a pair correctly. We report the sum of perception and cognition scores.

\noindent\textbf{MMBench-EN.}
MMBench \citep{liu2024mmbench} is a benchmark with multiple choices per question covering a broad range of visual perception and reasoning abilities. We use the cached English development set with 4,377 evaluation records and report accuracy using deterministic option extraction.

\noindent\textbf{MMMU.}
MMMU \citep{yue2024mmmu} tests multimodal understanding and reasoning at an expert level across 30 academic subjects. We evaluate the validation split of 900 questions, which includes both questions with multiple choices and questions requiring free responses, and report accuracy with answer normalization for free responses.

\noindent\textbf{RealWorldQA.}
RealWorldQA \citep{xai2024realworldqa} evaluates understanding of everyday visual scenes, including photographs taken from vehicles. We use its test split of 765 examples, containing both questions with multiple choices and questions requiring short answers, and report accuracy after option extraction or answer normalization.

\noindent\textbf{MathVista.}
MathVista \citep{lu2024mathvista} evaluates mathematical reasoning grounded in visual inputs such as diagrams, charts, and natural images. We use the \texttt{testmini} split of 1,000 examples and report accuracy with local answer extraction followed by normalization of answer types and numerical precision.

\subsection{Images for Spatial Representation Analysis}

\noindent\textbf{MSCOCO.}
MSCOCO \citep{lin2014microsoft} provides natural images of everyday scenes with annotations of object instances. We select 500 images for the S-Space analysis \citep{mirros2026sspace} in \Cref{sec:spatial-analysis}, comparing horizontal, vertical, and depth information in the base model and its \methodname{} adaptation. These images form a set for representation analysis.

%% file: Sections/appendix_method.tex
\section{Method and Analysis}
\label{app:method}

\subsection{Attention Model and Compression of Count Margins}
\label{app:attention-proof}

\begin{assumption}[Homogeneous object and background tokens]
\label{ass:count-readout}
The head attends to a fixed number $M\geq2$ of visual tokens.
The LLM receives perfectly individuated objects: each of the $n\in\{0,\ldots,M\}$ objects occupies a distinct image token, so the number of object tokens is $N=n$.
The $n$ object tokens share logit $s_o$ and value $v_o$. The remaining $M-n$ tokens share logit $s_b$ and value $v_b$.
These logits and values, the output projection $W_O$, and the incoming residual $h_{\mathrm{in}}$ are independent of $n$.
Write $r=\exp(s_o-s_b)$.
\end{assumption}

\begin{proposition}[Compression of count margins under selective attention]
\label{prop:attention-compression}
Under \Cref{ass:count-readout}, the total attention mass on object tokens is $A(n)=nr/[M+n(r-1)]$, with the increment between adjacent counts given by \Cref{eq:count-margin}.
For a head that favors objects ($r>1$), $A(n)$ is strictly increasing and discretely strictly concave: $\Delta A(n+1)<\Delta A(n)$ for $0\leq n<M-1$.
Uniform attention ($r=1$) instead gives the constant interval $\Delta A(n)=1/M$.
\end{proposition}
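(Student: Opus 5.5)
The plan is a direct computation from the softmax definition, followed by elementary algebra on the resulting rational function.

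\emph{Attention mass.} Under \Cref{ass:count-readout}, the softmax denominator over the $M$ tokens is $n e^{s_o}+(M-n)e^{s_b}$. The object tokens contribute $n e^{s_o}$ to the numerator. Dividing numerator and denominator by $e^{s_b}$ gives
\[
A(n)=\frac{nr}{nr+M-n}=\frac{nr}{M+n(r-1)}.
\]
The denominator is positive for every $n\in\{0,\ldots,M\}$, since it is a sum of nonnegative terms and at least one of $nr>0$ or $M-n>0$ holds. Writing $D(n)=M+n(r-1)$, I will put $A(n+1)-A(n)$ over the common denominator $D(n)D(n+1)$. The numerator is
\[
(n+1)r\,D(n)-nr\,D(n+1)=r\bigl[D(n)+n(D(n)-D(n+1))\bigr]=r\bigl[M+n(r-1)-n(r-1)\bigr]=rM.
\]
This recovers \Cref{eq:count-margin} for $0\le n<M$.

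\emph{Monotonicity and the uniform case.} Since $rM>0$ and both denominator factors are positive, $\Delta A(n)>0$, so $A$ is strictly increasing for every $r>0$, and in particular for $r>1$. For $r=1$ we have $D(n)\equiv M$, so $\Delta A(n)=M/M^2=1/M$, which is constant.

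\emph{Concavity.} For $0\le n<M-1$, I will compare $\Delta A(n+1)$ with $\Delta A(n)$ through their ratio:
\[
\frac{\Delta A(n+1)}{\Delta A(n)}=\frac{D(n)D(n+1)}{D(n+1)D(n+2)}=\frac{D(n)}{D(n+2)}.
\]
When $r>1$, $D$ is strictly increasing with slope $r-1>0$ and is positive, so $D(n)<D(n+2)$. The ratio is therefore less than $1$, and since both increments are positive this gives $\Delta A(n+1)<\Delta A(n)$. The same computation also shows that $0<r<1$ would give convexity, which is a useful sanity check.

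The only step needing care is confirming that the denominators stay positive over the full range $n\le M$, so that dividing by them and comparing signs is valid. This is immediate from $D(n)=nr+(M-n)$, so I expect no real obstacle.
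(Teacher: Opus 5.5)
Your proof is correct and matches the paper's argument step for step: the same softmax computation, the same positivity of the denominators $M+n(r-1)$ on $0\le n\le M$, and the same simplification of the numerator to $rM$. The only cosmetic difference is in the concavity step, where you use the ratio $\Delta A(n+1)/\Delta A(n)=D(n)/D(n+2)$ and the paper writes out the difference $-2rM(r-1)/(d_nd_{n+1}d_{n+2})$; both come down to $D(n+2)-D(n)=2(r-1)>0$.
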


\begin{proof}
Under \Cref{ass:count-readout}, the softmax denominator is
$ne^{s_o}+(M-n)e^{s_b}$.
Summing the weights of the $n$ object tokens and dividing by $e^{s_b}$ gives
$A(n)=nr/[M+n(r-1)]$.
Let $d_n=M+n(r-1)$, which is positive for $0\leq n\leq M$ and $r>0$.
Direct subtraction yields
\begin{equation}
    A(n+1)-A(n)
    =r\frac{(n+1)d_n-nd_{n+1}}{d_nd_{n+1}}
    =\frac{rM}{d_nd_{n+1}}>0.
\end{equation}
For $r>1$ and $0\leq n<M-1$,
\begin{equation}
    \Delta A(n+1)-\Delta A(n)
    =-\frac{2rM(r-1)}{d_nd_{n+1}d_{n+2}}<0.
\end{equation}
This proves strict discrete concavity.
If $r=1$, then $d_n=M$ and $\Delta A(n)=1/M$.
\end{proof}

The inverse quadratic approximation follows from the exact factorization, for $n\geq1$ and $r>1$,
\begin{equation}
    \Delta A(n)
    =\frac{rM}{(r-1)^2n(n+1)}
    \left(1+\frac{M}{n(r-1)}\right)^{-1}
    \left(1+\frac{M}{(n+1)(r-1)}\right)^{-1}.
\end{equation}
Both correction factors are close to one when $n(r-1)\gg M$.
This is an approximation within the finite count range $n<M$, rather than a limit that sends $n$ beyond the token budget.

\subsection{Decoder Sensitivity Under Compressed Count Intervals}
\label{app:robustness-proof}

Under \Cref{ass:count-readout}, $N=n$ and the hidden state is given by \Cref{eq:attention-hidden-state}.
Its distance between adjacent counts satisfies $D_n=K\Delta A(n)$ as stated in \Cref{eq:hidden-count-margin}, where $K=\|W_O(v_o-v_b)\|_2$.
If $D_n>0$, every scalar decoder exact on $h_n,h_{n+1}$ has Lipschitz constant at least $1/D_n$.
A decoder can distinguish these two counts under every additive perturbation of $\ell_2$ norm at most $\varepsilon$ if and only if $D_n>2\varepsilon$.
If $D_n\leq2\varepsilon$, every scalar decoder has absolute error at least $1/2$ for one of the two counts at some admissible perturbed state.
When $K=0$, the two unperturbed states coincide.

\begin{proof}
The total object and background weights are $A(n)$ and $1-A(n)$, so
$o_n=v_b+A(n)(v_o-v_b)$.
The fixed incoming residual cancels between adjacent states:
\begin{equation}
    h_{n+1}-h_n=\Delta A(n)W_O(v_o-v_b).
\end{equation}
Taking Euclidean norms proves \Cref{eq:hidden-count-margin}.
For any exact scalar decoder with finite Lipschitz constant $L_F$,
\begin{equation}
    1=|F(h_{n+1})-F(h_n)|
    \leq L_F\|h_{n+1}-h_n\|_2=L_FD_n.
    \label{eq:decoder-sensitivity}
\end{equation}
Hence $L_F\geq1/D_n$ whenever $D_n>0$. A decoder without a finite Lipschitz constant satisfies the bound trivially.

Define the full closed perturbation balls
\begin{equation}
    B_n(\varepsilon)=\{z:\|z-h_n\|_2\leq\varepsilon\}.
\end{equation}
If $D_n>2\varepsilon$, the balls around $h_n$ and $h_{n+1}$ are disjoint.
The decoder that selects the nearest center distinguishes them: for $z\in B_n(\varepsilon)$,
\begin{equation}
    \|z-h_{n+1}\|_2
    \geq D_n-\|z-h_n\|_2
    >\varepsilon
    \geq\|z-h_n\|_2,
\end{equation}
and symmetrically for $z\in B_{n+1}(\varepsilon)$.

Conversely, if $D_n\leq2\varepsilon$, the midpoint
$z_\star=(h_n+h_{n+1})/2$ lies in both balls.
A single decoder output cannot equal both labels there.
For every scalar decoder $F$, the triangle inequality gives
\begin{equation}
    1\leq |F(z_\star)-n|+|F(z_\star)-(n+1)|,
\end{equation}
so at least one of the two absolute errors is at least $1/2$.
Finally, $K=0$ implies $h_{n+1}=h_n$, making exact decoding impossible even at $\varepsilon=0$.
\end{proof}

The result is pairwise and conditional on \Cref{ass:count-readout}.
For a finite family of count states, a decoder that selects the nearest center is robust on all full perturbation balls if their minimum pairwise distance exceeds $2\varepsilon$.
In the homogeneous model with $K>0$, the states lie in order on a line, so the minimum pairwise distance is the smallest adjacent distance.
Heterogeneous values, additional attention heads, or residuals that depend on count can change this geometry and preserve other count cues.
Furthermore, actual image perturbations need not fill a norm ball: ball overlap proves an obstruction in the worst case for that uncertainty model, not the existence of a particular image transformation.

\subsection{\texorpdfstring{Proof of \Cref{thm:residual-count-readout}}{Stable residual count readout proof}}
\label{app:residual-readout-proof}

\begin{assumption}[Approximately additive residual evidence and bounded interference]
\label{ass:residual-evidence}
For a finite family of count states $n=0,\ldots,n_{\max}$, $n_{\max}\geq1$, let $h_n,r_n\in\mathbb R^d$ be the LLM hidden state before visual residual injection and the projected feature from the visual route, with fixed gate $g\neq0$.
There exist a unit vector $u$ and constants $a\in\mathbb R$, $c>2\eta\geq0$, $\beta\geq0$, shared across all counts, such that
\begin{equation}
    u^\top r_n=cn+a+\xi_n,\qquad |\xi_n|\leq\eta,
    \label{eq:additive-evidence}
\end{equation}
\begin{equation}
    |u^\top(h_{n+1}-h_n)|\leq\beta,\qquad 0\leq n<n_{\max}.
    \label{eq:residual-preservation}
\end{equation}
\end{assumption}

With $\widetilde h_n$ and $\kappa$ defined in \Cref{eq:evidence-residual}, these assumptions imply \Cref{thm:residual-count-readout} whenever $\kappa>0$.
The constructed decoder also satisfies, for every $\|\delta\|_2\leq\varepsilon$,
\begin{equation}
    |F(\widetilde h_n+\delta)-n|\leq\varepsilon/\kappa.
    \label{eq:stable-count-readout}
\end{equation}

\begin{proof}
Let $s=\operatorname{sign}(g)$ and project the residual states onto the unit direction $su$:
$t_n=su^\top \widetilde h_n$.
Using \Cref{eq:additive-evidence,eq:residual-preservation},
\begin{equation}
    \begin{aligned}
        t_{n+1}-t_n
        &=s u^\top(h_{n+1}-h_n)
          +|g|\bigl(c+\xi_{n+1}-\xi_n\bigr)\\
        &\geq-\beta+|g|(c-2\eta)=\kappa>0.
    \end{aligned}
    \label{eq:residual-projected-gap}
\end{equation}
Thus $t_0<\cdots<t_{n_{\max}}$.
For $m>n$, telescoping the increments yields
\begin{equation}
    \|\widetilde h_m-\widetilde h_n\|_2
    \geq |u^\top(\widetilde h_m-\widetilde h_n)|
    =t_m-t_n\geq\kappa(m-n).
\end{equation}
Symmetry gives \Cref{eq:residual-margin} for all pairs.

Define a continuous scalar function on the real line by
\begin{equation}
    f(t)=
    \begin{cases}
        0, & t\leq t_0,\\
        n+\dfrac{t-t_n}{t_{n+1}-t_n},
           & t_n\leq t\leq t_{n+1},\quad 0\leq n<n_{\max},\\
        n_{\max}, & t\geq t_{n_{\max}}.
    \end{cases}
    \label{eq:residual-interpolating-decoder}
\end{equation}
Each linear segment has slope between $0$ and $1/\kappa$, and the expressions agree at their endpoints.
Hence $f$ has global Lipschitz constant at most $1/\kappa$.
The decoder $F(x)=f(su^\top x)$ satisfies $F(\widetilde h_n)=n$ and
\begin{equation}
    |F(x)-F(y)|
    \leq \frac{|u^\top(x-y)|}{\kappa}
    \leq \frac{\|x-y\|_2}{\kappa}.
\end{equation}
Applying this inequality with $x=\widetilde h_n+\delta$ and $y=\widetilde h_n$ proves \Cref{eq:stable-count-readout}.
For $\varepsilon<\kappa/2$, the error is strictly less than $1/2$, so rounding to the nearest integer returns $n$.
\end{proof}

The theorem applies directly to residual addition in the original state space.
Its preservation condition bounds interference only along the shared evidence direction. Orthogonal changes in the base stream need not be small.
The same direction and constants must work across the entire stated count family.
For a collection with multiple images per count, coverage requires those images' representations to lie in the corresponding perturbation neighborhoods, or a separate uniform separation argument.
Neither this coverage nor the required approximate additivity is guaranteed by convolution alone.
The theorem establishes the existence of a stable readout at the injection representation, not that optimization finds it or that all later transformations preserve it.

A visual route can use information that is weak in the aggregated state.
In contrast, a residual depending only on that state, $\widehat h=h+gC(h)$ with $C$ having Lipschitz constant $L_C$, obeys
\begin{equation}
    \|\widehat h_{n+1}-\widehat h_n\|_2
    \leq(1+|g|L_C)\|h_{n+1}-h_n\|_2.
\end{equation}
Thus a residual with bounded sensitivity that acts only on an already compressed state has bounded distance amplification and cannot separate identical states.
Access to visual features retaining complementary count evidence is therefore relevant to the proposed route. Its usefulness remains an empirical question for the learned adapters.

\subsection{Object Formation and Aggregation Error}
\label{app:object-formation}

Let $\widehat{\mathcal O}(x,q)$ be a conceptual set of predicted object units,
$\hat m=|\widehat{\mathcal O}(x,q)|$, and $n=|\mathcal O(x,q)|$.
For a predicted count $\hat n$, the triangle inequality gives the valid decomposition
\begin{equation}
    \underbrace{|\hat n-n|}_{\mathcal E_{\mathrm{count}}}
    \leq
    \underbrace{|\hat m-n|}_{\mathcal E_{\mathrm{unit}}}
    +
    \underbrace{|\hat n-\hat m|}_{\mathcal E_{\mathrm{agg}}}.
    \label{eq:count-error-decomposition}
\end{equation}
Equality is not guaranteed because errors in the two stages can cancel.
Here $\mathcal E_{\mathrm{unit}}$ measures only the cardinality error of the formed units. It does not detect incorrect object identities or compensating splits and merges.
The decomposition separates two possible failure sources without asserting that one universally dominates the other.

For the patch fractions in \Cref{eq:patch-fragments}, define the fragmentation index
\begin{equation}
    J_k=1-\sum_{i=1}^M\rho_{ki}^2.
\end{equation}
Because the fractions are nonnegative and sum to one, $0\leq J_k\leq1-1/M$.
A disk contained in one patch has $J_k=0$. A disk split equally between two patches has $J_k=1/2$.
An unequal split between two patches has $J_k=2\rho(1-\rho)<1/2$.
This statistic describes how object support is distributed across tokens, without asserting a convergence rate or an ordering for arbitrary placements across multiple patches.

Patch occupancy is sufficient for counting only when each object occupies one patch and each occupied patch contains one object.
Even then, linear decodability requires a feature condition.
For example, suppose the initial embeddings after correcting for position
$\bar z_i=z_i^{(0)}-e_i$ obey a shared margin condition: there exist $u$, $t$, and $\gamma>0$ such that
\begin{equation}
    u^\top\bar z_i-t\geq\gamma
    \quad\text{on occupied patches},\qquad
    u^\top\bar z_i-t\leq-\gamma
    \quad\text{on background patches}.
\end{equation}
Then $\sum_i\mathbf 1\{u^\top\bar z_i>t\}=n$.
Patch alignment alone does not prove that an arbitrary learned projection satisfies this condition.

Objects with diverse appearances or objects crossing patch boundaries require grouping several fragments while separating nearby instances.
The pixel hierarchy used in the controlled ViT experiments has four stages, each consisting of a $3\times3$ convolution, ReLU, and $2\times2$ max pooling.
Each stage's features are projected with a $1\times1$ convolution, adaptively pooled to the ViT patch grid, normalized, and added to the patch tokens before the corresponding transformer block through a scalar gate.
The pretrained adapters follow \Cref{eq:token-convstack}. Their injection points and training setup are specified in \Cref{sec:experimental-setup}.
These architectural priors motivate the method. Neither exact additive evidence nor a strict improvement in information content is assumed.

\subsection{Metrics for Controlled Counting Experiments}
\label{app:counting-metrics}

For \Cref{tab:synpoly-main}, we evaluate each model on $T=1{,}000$ \synpoly{} images, with $100$ images for each count from $1$ to $10$.
Let $y_i\in\{1,\ldots,10\}$ be the true count of image $i$ and $p_i(k)$ its predicted probability for count $k$.
The controlled models obtain these probabilities by applying softmax to the scaled similarities between the image embedding and ten text prototypes, one for each count.
Each prototype is the normalized average of $20$ caption embeddings for that count.

\noindent\textbf{Accuracy.}
We use the most probable count class as the discrete prediction:
\begin{equation}
    \hat y_i^{\mathrm{argmax}}=\operatorname*{arg\,max}_{k\in\{1,\ldots,10\}}p_i(k),
    \qquad
    \mathrm{Accuracy}=\frac{1}{T}\sum_{i=1}^{T}
    \mathbf{1}\{\hat y_i^{\mathrm{argmax}}=y_i\}.
    \label{eq:controlled-count-accuracy}
\end{equation}
Accuracy is reported as a fraction in \Cref{tab:synpoly-main}.

\noindent\textbf{MAE and RMSE.}
Both error metrics use the continuous expected count from the same predictive distribution,
\begin{equation}
    \hat y_i^{\mathrm{exp}}=\sum_{k=1}^{10}k\,p_i(k),
    \label{eq:controlled-expected-count}
\end{equation}
without rounding:
\begin{equation}
    \mathrm{MAE}=\frac{1}{T}\sum_{i=1}^{T}|\hat y_i^{\mathrm{exp}}-y_i|,
    \qquad
    \mathrm{RMSE}=\sqrt{\frac{1}{T}\sum_{i=1}^{T}
    (\hat y_i^{\mathrm{exp}}-y_i)^2}.
    \label{eq:controlled-count-errors}
\end{equation}
Thus, accuracy measures exact classification, while MAE and RMSE measure the deviation of the expected count.

\noindent\textbf{Class separation.}
Let $z_i\in\mathbb R^{128}$ be the final image embedding, and let $\mathcal I_k=\{i:y_i=k\}$ index the images with count $k$.
We compute each class centroid and the mean Euclidean radius within that class as
\begin{equation}
    \mu_k=\frac{1}{|\mathcal I_k|}\sum_{i\in\mathcal I_k}z_i,
    \qquad
    \sigma_k=\frac{1}{|\mathcal I_k|}\sum_{i\in\mathcal I_k}
    \|z_i-\mu_k\|_2.
    \label{eq:controlled-class-radius}
\end{equation}
The reported score averages the normalized centroid distances over the nine adjacent count pairs:
\begin{equation}
    \mathrm{ClassSep}=\frac{1}{9}\sum_{k=1}^{9}
    \frac{\|\mu_{k+1}-\mu_k\|_2}{\sigma_k+\sigma_{k+1}+10^{-8}}.
    \label{eq:controlled-class-separation}
\end{equation}
A larger score indicates greater separation between adjacent count classes relative to the spread within each class.
All distances are computed in the original embedding space before t-SNE projection.

\subsection{Parameter Count}
\label{app:parameter-count}

\Cref{tab:parameter-count} summarizes the parameter budget: \methodname{} adds $2.708$M parameters and trains $42.827$M parameters including the existing visual merger.

\begin{table}[htbp]
    \centering
    \caption{\textbf{Parameter budget of \methodname{}-8B.} Percentages are relative to Qwen3-VL-8B's $8.767$B parameters, including vision and language components. The shared DeepStack adapter is counted once.}
    \label{tab:parameter-count}
    \setlength{\tabcolsep}{12pt}
    \begin{tabular}{@{}lrr@{}}
        \toprule
        Component & Parameters (M) & \% of base model \\
        \midrule
        Vision layer adapter & 0.596 & 0.007 \\
        Shared DeepStack adapter & 2.112 & 0.024 \\
        \textbf{Total added by \methodname{}} & \textbf{2.708} & \textbf{0.031} \\
        \midrule
        Existing visual merger & 40.119 & 0.458 \\
        \textbf{Total trainable} & \textbf{42.827} & \textbf{0.488} \\
        \bottomrule
    \end{tabular}
\end{table}

%% file: Sections/appendix_ablation.tex
\FloatBarrier
\section{Additional Ablation Study}
\label{app:ablation}

\subsection{ViT Injection Layer}
\label{app:vit-injection}

We vary the \methodname{} injection layer in Qwen3-VL-8B from $0$ to $26$, retaining adapters at all native DeepStack levels.
We select layer $18$ based on accuracy on the \textbf{PixMo-Count validation split}.
\Cref{fig:vit-injection-ablation} reports accuracy on PixMo-Count test and SpatialEval, where the selected layer also achieves the highest accuracy ($75.14\%$ and $66.67\%$, respectively).
On PixMo-Count test, it exceeds the runner up, layer $17$, by $1.71$ percentage points.

\begin{figure}[htbp]
    \centering
    \includegraphics[width=0.9\linewidth]{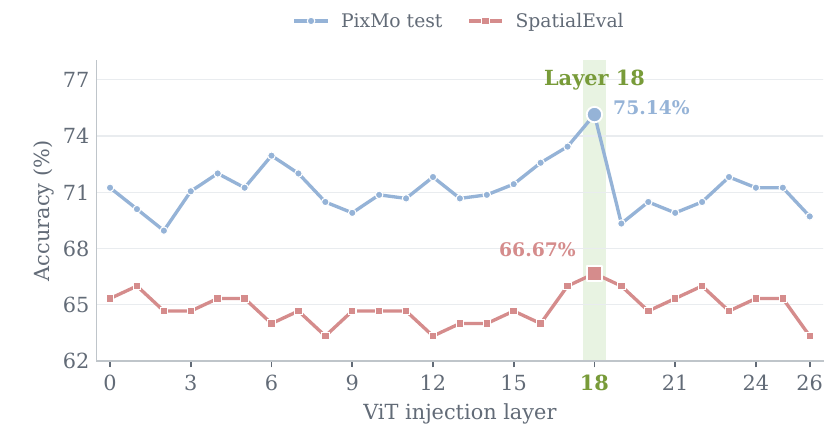}
    \caption{\textbf{Ablation of the ViT injection layer.}
    Accuracy on PixMo-Count test and SpatialEval for Qwen3-VL-8B with \methodname{} at each vision layer.
    The green band marks layer $18$, selected using PixMo-Count validation accuracy.}
    \label{fig:vit-injection-ablation}
\end{figure}

%% file: Sections/appendix_count_geometry.tex
\clearpage
\section{Quantitative Analysis of Count Representations}
\label{app:count-geometry}

We compare Qwen3-VL-8B and \methodname{} on the same $500$ \synpoly{} images used in \Cref{fig:synpoly-final-tsne}, with $50$ images per count from $1$ to $10$.
Both analyses use the final decoder state of the last prompt token before the final RMSNorm, computed from the image and counting prompt.
We measure separation in this original hidden space.

\subsection{Silhouette Across Count Classes}
\label{app:count-silhouette}

Using cosine distance, let $a_i$ be the mean distance from image $i$ to other images of the same count, and let $b_i$ be the smallest mean distance to any other count class.
The silhouette of image $i$ is
\begin{equation}
    s_i=\frac{b_i-a_i}{\max(a_i,b_i)}.
    \label{eq:prompt-count-silhouette}
\end{equation}
\Cref{fig:count-geometry}(a) reports the mean silhouette for each count.
\methodname{} improves the score in every class, raising the overall mean from $0.479$ to $0.707$.
Recomputing the metric within counts $6$ to $10$ gives an increase from $0.319$ to $0.627$, showing clearer separation among higher counts.

\subsection{Separation Between Adjacent Counts}
\label{app:adjacent-count-separation}

To measure how well neighboring counts are distinguished, we normalize each hidden vector to unit $\ell_2$ norm and compute the class centroid $\mu_n$ and mean distance to that centroid $\sigma_n$, as in \Cref{eq:controlled-class-radius}.
We report the centroid distance relative to the spread of the two classes:
\begin{equation}
    \Gamma_n=\frac{\|\mu_{n+1}-\mu_n\|_2}{\sigma_n+\sigma_{n+1}},
    \qquad n\in\{1,\ldots,9\}.
    \label{eq:prompt-adjacent-separation}
\end{equation}
As shown in \Cref{fig:count-geometry}(b), \methodname{} improves separation for all nine adjacent pairs.
The mean $\Gamma_n$ increases from $0.996$ to $1.449$. For counts $9$ and $10$, it increases from $0.558$ to $1.058$.
Together, these analyses show that the improved count clusters in \Cref{fig:synpoly-final-tsne} are accompanied by greater separation in the original representation space.

\begin{figure}[htbp]
    \centering
    \includegraphics[width=\linewidth]{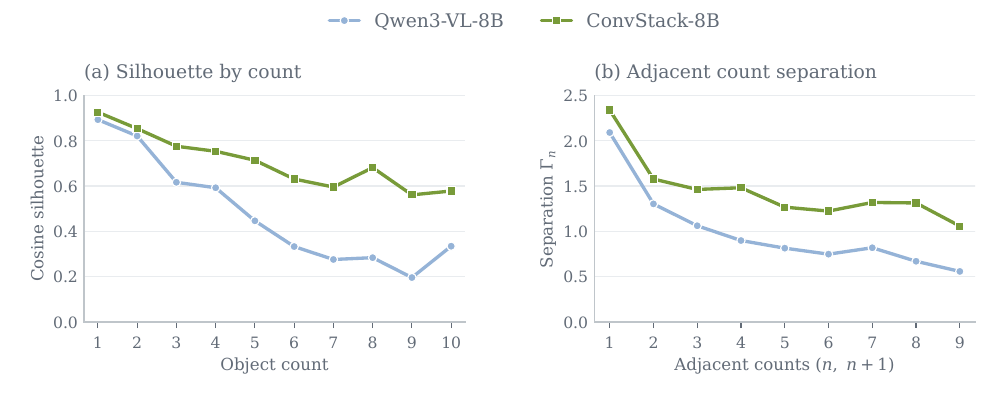}
    \caption{\textbf{Quantitative count separation on \synpoly{}.}
    (a) Mean cosine silhouette for each of the ten count classes.
    (b) Separation between counts $n$ and $n+1$, relative to their combined spread.
    Both models are evaluated on the same images. Larger values indicate better separation.}
    \label{fig:count-geometry}
\end{figure}